\documentclass[letterpaper, 10 pt, conference]{ieeeconf}
\IEEEoverridecommandlockouts
\overrideIEEEmargins 
\usepackage{amsmath,amsthm,amssymb,amsfonts,mathtools}
\usepackage{graphicx}
\usepackage{textcomp}
\usepackage{caption}
\usepackage{subcaption} 
\usepackage{hyperref}
\hypersetup{
    colorlinks=true,
    linkcolor=blue,
    filecolor=magenta,      
    urlcolor=cyan,
}
\usepackage{xcolor}
\DeclareMathOperator*{\argmin}{argmin}
\DeclareMathOperator*{\argmax}{argmax}

\DeclareMathOperator*{\E}{\mathbb{E}}

\newcommand{\abs}[1]{\left|#1\right|}
\newtheorem{prop}{Proposition}

\newcommand{\set}[1]{\left\{#1\right\}}
\newcommand{\brk}[1]{\left(#1\right)}
\newcommand{\bsq}[1]{\left[#1\right]}

\newcommand{\prob}[1]{\mathbb P\brk{#1}}
\newcommand{\R}{\mathbb{R}}

\DeclareMathAlphabet{\mymathbb}{U}{BOONDOX-ds}{m}{n}

\begin{document}

\title{Autonomous Driving With Perception Uncertainties: Deep-Ensemble Based Adaptive Cruise Control\\
\thanks{$^{*}$Xiao Li, Anouck Girard, and Ilya Kolmanovsky are with the Department of Aerospace Engineering, University of Michigan, Ann Arbor, MI 48109, USA. {\tt\small \{hsiaoli, anouck, ilya\}@umich.edu}}

\thanks{$^{\dagger}$H. Eric Tseng, Retired Senior Technical Leader, Ford Research and Advanced Engineering, Chief Technologist, Excelled Tracer LLC.
{\tt\small hongtei.tseng@gmail.com}}

\thanks{This research was supported by the University of Michigan / Ford Motor Company Alliance program, and by the National Science Foundation under Awards CMMI-1904394 and ECCS-1931738.}
}

\author{Xiao Li$^{*}$, H. Eric Tseng$^{\dagger}$, Anouck Girard$^{*}$, Ilya Kolmanovsky$^{*}$}

\maketitle

\begin{abstract}
Autonomous driving depends on perception systems to understand the environment and to inform downstream decision-making. While advanced perception systems utilizing black-box Deep Neural Networks (DNNs) demonstrate human-like comprehension, their unpredictable behavior and lack of interpretability may hinder their deployment in safety critical scenarios. In this paper, we develop an Ensemble of DNN regressors (Deep Ensemble) that generates predictions with quantification of prediction uncertainties. In the scenario of Adaptive Cruise Control (ACC), we employ the Deep Ensemble to estimate distance headway to the lead vehicle from RGB images and enable the downstream controller to account for the estimation uncertainty. We develop an adaptive cruise controller that utilizes Stochastic Model Predictive Control (MPC) with chance constraints to provide a probabilistic safety guarantee. We evaluate our ACC algorithm using a high-fidelity traffic simulator and a real-world traffic dataset and demonstrate the ability of the proposed approach to effect speed tracking and car following while maintaining a safe distance headway. The out-of-distribution scenarios are also examined. 
\end{abstract}

\section{Introduction}\label{sec:intro}
Autonomous driving algorithms are typically structured as a pipeline of individual modules: The perception module gathers environmental information, while the decision-making module uses this information to make maneuver decisions. Recent advances in Deep Neural Networks (DNNs) have enabled autonomous vehicles with human-like perception capability to effectively extract information about the surroundings. For instance, research has been dedicated to integrating DNN-enable perception functions, such as localization \cite{lu2019l3} and mapping \cite{roddick2020predicting}, into autonomous driving systems. However, a significant drawback of DNN-based perceptions is the lack of interpretability; furthermore, the ability of DNNs to generalize may be limited by the coverage of the available training data. 

Furthermore, even though Neural Networks are universal function approximators \cite{hornik1989multilayer}, they have approximation errors. Moreover, in the case of Out-Of-Distribution (OOD) observations, the performance of DNNs becomes even more unpredictable, e.g., when the testing data follows a different statistical distribution from that observed during training \cite{goodfellow2016deep}. The uncertainty in perception can also impact the performance of downstream decision-making. In this paper, we consider Adaptive Cruise Control (ACC) leveraging camera sensors. The controller needs to track driver-set speed while maintaining a safe distance from the lead vehicle. In such safety-critical scenarios, accounting for the uncertainty in perception is crucial to the decision-making and control design, and vital for securing safety at the system level. 

In particular, methods have been developed in the literature to quantify DNN uncertainties. Bayesian Neural Networks \cite{neal2012bayesian} have been investigated to represent the uncertainties in DNN predictions via probabilistic modeling of the neural network parameters. Subsequent works have explored the Monte Carlo Dropout technique to reduce the computation burden in Bayesian NN \cite{gal2016dropout}. To enhance the robustness of DNNs against adversarial attacks, methods have been developed to create an ensemble utilizing a diverse set of DNNs for a single task \cite{deep_ensemble}. Additionally, other approaches, such as Laplace Approximation \cite{ritter2018scalable}, have been proposed to quantify DNN uncertainties. Control co-designs have been studied, under the assumption of bounded DNN errors, to track trajectories \cite{dean2020robust} and ensure system-level safety \cite{li2023system}. However, these approaches are limited to in-distribution settings \cite{dean2020robust}. 

In contrast, this work explores Deep Ensembles \cite{deep_ensemble} due to their good empirical performance in handling OOD scenarios. Specifically, we investigate the application to ACC using camera sensors and we develop an ensemble of DNNs to estimate the distance headway from RGB images of the lead vehicle. Subsequently, we formulate a Stochastic MPC problem to accelerate and brake the ego vehicle for speed-tracking and car-following. The algorithms we propose offer several potential advantages:
\begin{itemize}
    \item The Deep Ensemble employs a heterogeneous set of DNNs, that both generate distance headway estimation and quantify the estimation uncertainties, from RGB images of the lead vehicle.
    \item Leveraging the results from the Deep Ensemble, the Stochastic MPC is utilized for ACC, guaranteeing probabilistic safety through the integration of chance constraints. 
    \item The proposed ACC algorithm achieves good performance in car-following and speed-tracking tasks, ensuring safety in both in-distribution and OOD scenarios, as verified using a high-fidelity simulator. 
\end{itemize}

This paper is organized as follows: In Sec.~\ref{sec:problem}, we introduce the ACC problem. We also outline the assumptions made regarding vehicle kinematics and the ACC design objectives. In Sec.~\ref{sec:method}, we present our Deep Ensemble development that estimates the distance headway which informs the subsequent Stochastic MPC to control the acceleration of the ego vehicle. In Sec.~\ref{sec:results}, we demonstrate the Deep Ensemble's ability to provide estimations and quantify estimation uncertainties. Furthermore, we validate the proposed adaptive cruise controller, using a high-fidelity simulator and a real-world traffic dataset. Finally, Sec.~\ref{sec:conclusion} provides conclusions.

\section{Problem Formulation}\label{sec:problem}
\begin{figure}[!h]
    \centering
    \includegraphics[width=0.47\textwidth]{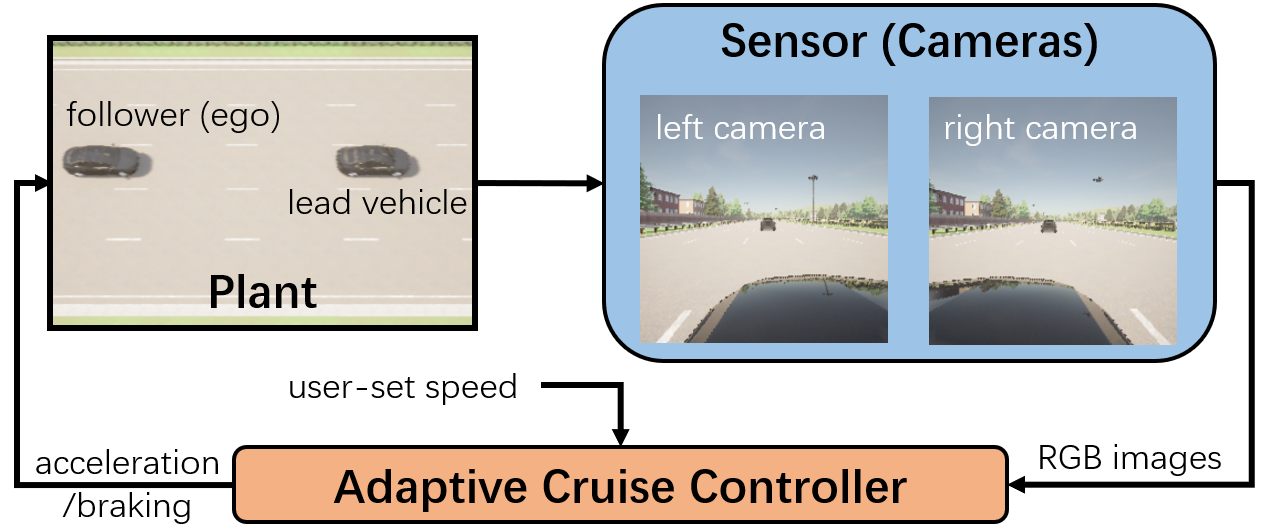}
    \caption{A schematic diagram of the Adaptive Cruise Control (ACC) scenario: The follower (ego vehicle) keeps a safe distance headway to the lead vehicle in the front leveraging camera sensors.}
    \label{fig:acc_problem}
\end{figure}

In this paper, we focus on control design with visual perception (i.e., cameras) in the loop. As shown in Fig.~\ref{fig:acc_problem}, the ego vehicle observes the lead vehicle in the front via camera sensors, installed to the left and right of the ego vehicle's front window. Using the RGB images from the cameras, the ego vehicle estimates its distance headway to the lead vehicle and, subsequently, commands its acceleration and braking to keep a safe distance headway and track a desired speed. 

We use the following discrete-time model to represent the vehicle kinematics,
\begin{equation}\label{eq:kinematics}
\begin{aligned}    
    x_{k+1} &= x_k + v_k \Delta t + \frac{1}{2}a_k \Delta t^2, \\
    v_{k+1} &= v_k + a_k \Delta t,
\end{aligned}
\end{equation}
where $x_k$, $v_k$, and $a_k$ are the longitudinal position, velocity, and acceleration at time instance $t_k$, respectively; $\Delta t > 0$ is the time in second elapsed between discrete time instances $t_k$ and $t_{k+1}$. Since we focus on car-following development, we only consider the longitudinal kinematics in Eq.~\eqref{eq:kinematics} while neglecting the lateral ones. Namely, we assume both ego and lead vehicles follow this dynamics model, do longitudinal acceleration or braking maneuvers, and keep the current lane. In the sequel, we use variables with superscripts $x_k^{(l)}$, $v_k^{(l)}$ to represent the states of the lead vehicle while those without (e.g., $x_k$, $v_k$, and $a_k$) denote the states of the ego vehicle. We use the following models to represent the camera sensor measurements,
\begin{equation}\label{eq:sensor}
    I_{k,l} = q_l(x_k, x_k^{(l)}),\; I_{k,r} = q_r(x_k, x_k^{(l)}),
\end{equation}
where the measurements $I_{k,l},I_{k,r}\in\R^{3\times 224\times 224}$ are RGB images of $3$ color channels and size $224\times 224$ acquired from the left and right cameras, respectively. 

In this work, we consider the development of an adaptive cruise controller that adopts the following form 
\begin{equation}\label{eq:controller}
    a_k = K(I_{k,l},I_{k,r}, v_s),
\end{equation}
where $v_s$ is the driver-set ACC speed. The controller $K$ computes the acceleration/deceleration command for the ego vehicle based on a pair of RGB images while incorporating the following control objectives and constraints:
\begin{itemize}
    \item safety: keep an adequate distance headway $d_k$, defined as the vehicle bumper-to-bumper distance, to the lead vehicle to prevent potential collisions.
    \item fuel economy: minimize the accumulated acceleration effort $\sum_{i=0}^{N-1} \abs{a_{k+i}}$ over a horizon of length $N$.
    \item driving comfort: minimize the rate of change in the acceleration trajectory $(a_{k+i})_{i=0}^{N-1}$.
    \item speed tracking: track the driver-set speed $v_s$.
    \item speed and acceleration limits: the speed $v_k$ and the acceleration $a_k$ within the interval $[v_{\min}, v_{\max}]$ and $[a_{\min}, a_{\max}]$, respectively.
\end{itemize}
This problem is challenging due to the high dimensionality of the image space, which can induce unpredictable behavior of the controller and, subsequently, raise safety concerns.

\section{Method}\label{sec:method}
\begin{figure*}[ht!]
    \vspace{0.5em}
    \centering
    \begin{subfigure}[t]{0.38\textwidth}
        \centering
        \includegraphics[width=\textwidth]{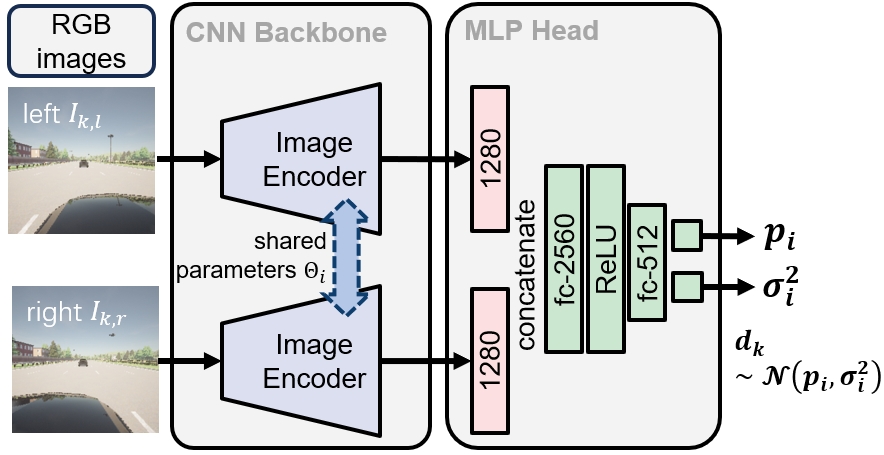}
        \caption{}
        \label{fig:dnn_arch}
    \end{subfigure}
    ~
    \begin{subfigure}[t]{0.59\textwidth}
        \centering
        \includegraphics[width=\textwidth]{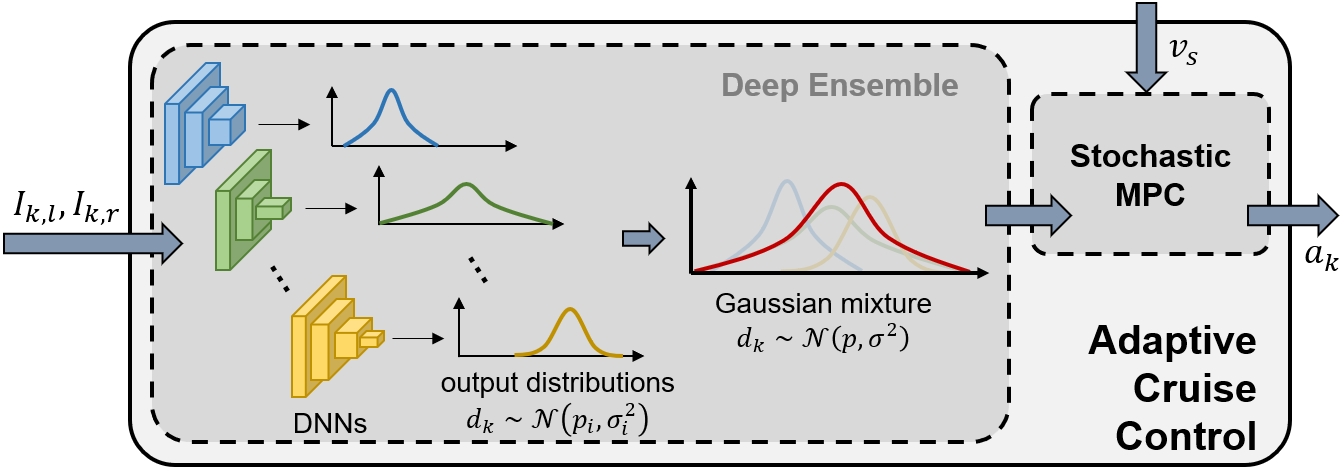}
        \caption{}
        \label{fig:ensemble_acc}
    \end{subfigure}
    \vspace{-0.5em}
    \caption{Schematic diagrams of adaptive cruise controller design. (a) Each DNN differs in the CNN architecture of the image encoder, and estimates the distribution of the distance headway from input RGB images. (b) An ensemble of DNNs, with a heterogeneous set of CNN architectures as image encoders, collectively estimates the distance headway as a Gaussian mixture; then a Stochastic MPC uses estimated headway mean and variance to compute the acceleration/deceleration command for the ego vehicle.}
    \label{fig:method}
\end{figure*}

We propose a modularized ACC development approach to enhance the safety guarantees. As shown in Fig.~\ref{fig:ensemble_acc}, a Deep Ensemble estimates the distance headway $d_k$ as a Gaussian distribution and the variance quantifies the estimation uncertainty in Sec.~\ref{subsec:ensemble}. Subsequently, a Stochastic MPC is utilized to optimize the acceleration trajectory to realize the aforementioned design objectives while ensuring probabilistic safety in Sec.~\ref{subsec:acc}. 

\subsection{Deep Neural Network Ensemble}\label{subsec:ensemble}

We implement a Deep Ensemble to estimate the distance headway $d_k$ ($d_k\in\R$, $d_k\geq 0$) to the lead vehicle given a pair of RGB images $I_{k,l}$, $I_{k,r}$ from on-board cameras. In this regression problem, the estimates from a regressor admit the following form,
\begin{equation}\label{eq:error}
     d_k = p(I_{k,l},I_{k,r}) + e(I_{k,l},I_{k,r}),
\end{equation}
where $p(I_{k,l},I_{k,r})$ is the distance headway estimate and $e(I_{k,l},I_{k,r})$ is the estimation error that depends on the current image observations. Typical approaches in the literature focus on learning an accurate mapping $p(I_{k,l},I_{k,r})$ that minimizes $\abs{d_k - p(I_{k,l},I_{k,r})}$ and do not pursue characterizing the behavior of the error $ e(I_{k,l},I_{k,r})$. The high dimensionality of image space requires complex Convolution Neural Networks (CNNs) as image encoders, which results in more unpredictable error dynamics $ e(I_{k,l},I_{k,r})$; hence in our work we are focusing on further modeling and characterizing this error. 

Inspired by \cite{est_distribution}, we assume the error $e(I_{k,l},I_{k,r})$ is zero-mean Gaussian. We develop DNNs that can simultaneously generate an estimate of $p(I_{k,l},I_{k,r})$ and quantify the estimation uncertainties by predicting the variance of the error $e(I_{k,l},I_{k,r})$. As shown in Fig.~\ref{fig:dnn_arch}, the individual $i$th DNN comprises two blocks: the CNN backbone takes two images $I_{k,l},I_{k,r}$ and embeds them into vectors $z_{k,l},z_{k,r}\in \R^{1280}$ using two identical CNN image encoders with shared parameters (i.e., $z_{k,l}=f_{\text{cnn}}(I_{k,l}|\Theta_i)$, $z_{k,r}=f_{\text{cnn}}(I_{k,r}|\Theta_i)$, and $\Theta_i$ is the shared parameters in the CNN image encoders); the subsequent Multi-Layer Perceptron (MLP) computes two outputs $p_i,\sigma^2_i$ from input vectors $z_{k,l},\;z_{k,r}$ according to,
\begin{equation}\label{eq:mlp}
\begin{array}{c}
     z_0 = [z_{k,l}^T ~~ z_{k,r}^T]^T,~~ z_1 = \sigma_{\text{ReLU}}\left(W_{i,1}z_0+b_{i,1}\right),\\
     z_2 = W_{i,2} z_1+b_{i,2}, ~~ [p_i ~~ \sigma^2_i]^T = z_2,
\end{array}
\end{equation}
where  $W_{i,1}\in\R^{2560\times 512}, W_{i,2}\in\R^{512\times 2}$ and $b_{i,1}\in\R^{512}, b_{i,2}\in\R^{2}$ are the network parameters; $\sigma_{\text{ReLU}}(z)=\max\set{0,z}$ is an element-wise ReLU activation function. The $i$th DNN produces estimate $p_i(I_{k,l},I_{k,r})$ of the actual distance headway $d_k$. It also computes an estimate of the variance $\sigma_i^2(I_{k,l},I_{k,r})$ of the error $e(I_{k,l},I_{k,r})$.

Given a training trajectory $\mathcal{D} = (d_k, I_{k,l},I_{k,r})_{k=1}^{M}$, the parameter $\Theta$ of the $i$th DNN, i.e., $\Theta = \set{\Theta_i, W_{i,1},W_{i,2},b_{i,1},b_{i,2}}$, is optimized using the following proposition:
\begin{prop}\label{prop:loss}
Given a training trajectory $\mathcal{D} = (d_k, I_{k,l},I_{k,r})_{k=1}^{M}$, assuming each data point $(d_k, I_{k,l},I_{k,r})\in D$ is independently collected, and the error is zero-mean Gaussian, i.e., $e(I_{k,l},I_{k,r})\sim\mathcal{N}\brk{0, \sigma_i^2(I_{k,l},I_{k,r})}$, the optimal parameter is attained according to the following likelihood maximization,
\begin{equation}\label{eq:likelihood}
    \Theta^* = \argmax\limits_{\Theta} \prob{\mathcal{D} | \Theta},
\end{equation}
and it is equivalent to the following optimization,
\begin{equation}\label{eq:loss}
\begin{array}{c}
    \Theta^* = \argmin\limits_{\Theta} \mathcal{L}\brk{\mathcal{D}|\Theta}  = \argmin\limits_{\Theta}  \sum_{k=1}^M \\
    \left[
    \log{\sigma_i^2(I_{k,l},I_{k,r}|\Theta)} + \frac{\brk{d_k - p_i(I_{k,l},I_{k,r}|\Theta)}^2}{\sigma_i^2(I_{k,l},I_{k,r}|\Theta)}
    \right].
\end{array}
\end{equation}
\end{prop}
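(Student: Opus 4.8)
The plan is a routine maximum-likelihood derivation for a heteroscedastic Gaussian regression model. First, I would use the independence assumption to factor the joint likelihood: since each sample $(d_k, I_{k,l}, I_{k,r})$ is collected independently, $\prob{\mathcal{D}\mid\Theta} = \prod_{k=1}^M p(d_k \mid I_{k,l}, I_{k,r}, \Theta)$. Next, invoke the modeling assumption of Eq.~\eqref{eq:error} together with $e(I_{k,l},I_{k,r})\sim\mathcal{N}(0,\sigma_i^2(I_{k,l},I_{k,r}))$: this makes $d_k$ conditionally Gaussian with mean $p_i(I_{k,l},I_{k,r}\mid\Theta)$ and variance $\sigma_i^2(I_{k,l},I_{k,r}\mid\Theta)$, so each factor is the explicit density $\frac{1}{\sqrt{2\pi\sigma_i^2}}\exp\!\big(-\frac{(d_k - p_i)^2}{2\sigma_i^2}\big)$.

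Then I would take logarithms, turning the product into a sum and the maximization in Eq.~\eqref{eq:likelihood} into $\argmax_\Theta \sum_{k=1}^M \big[-\tfrac12\log(2\pi) - \tfrac12\log\sigma_i^2(I_{k,l},I_{k,r}\mid\Theta) - \tfrac{(d_k - p_i(I_{k,l},I_{k,r}\mid\Theta))^2}{2\sigma_i^2(I_{k,l},I_{k,r}\mid\Theta)}\big]$. Finally, I would multiply by $-2$ to flip $\argmax$ to $\argmin$ and drop the additive constant $M\log(2\pi)$, which does not depend on $\Theta$; the factor $2$ likewise does not affect the minimizer. This yields exactly $\argmin_\Theta \sum_{k=1}^M \big[\log\sigma_i^2(I_{k,l},I_{k,r}\mid\Theta) + \frac{(d_k - p_i(I_{k,l},I_{k,r}\mid\Theta))^2}{\sigma_i^2(I_{k,l},I_{k,r}\mid\Theta)}\big] = \argmin_\Theta \mathcal{L}(\mathcal{D}\mid\Theta)$, establishing the claimed equivalence.

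There is essentially no hard obstacle here; the argument is standard. The only point deserving a word of care is the implicit identification of the arguments: the outputs $p_i$ and $\sigma_i^2$ are produced by the network of Eqs.~\eqref{eq:mlp} fed the image pair $(I_{k,l},I_{k,r})$, so one should note that evaluating the density at the $k$-th sample means using the network outputs on that sample's images, and that $\sigma_i^2(I_{k,l},I_{k,r}\mid\Theta) > 0$ is required for the density (and the logarithm) to be well defined — in practice one parametrizes the variance head so positivity holds, e.g. via a softplus, though this is a modeling detail outside the statement. Modulo that, the equivalence is exact, not merely up to monotone transformation of the objective value, and the set of minimizers coincides.
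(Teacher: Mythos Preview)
Your proposal is correct and follows essentially the same approach as the paper: factor the likelihood by independence, substitute the Gaussian density from the zero-mean error assumption, and pass to the negative log-likelihood to obtain the loss in Eq.~\eqref{eq:loss}. Your write-up is in fact slightly more explicit about the constant term and the factor of $2$ than the paper's own proof.
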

In the case of a large dataset $\mathcal{D}$, we note that an iterative training algorithm based on Monte Carlo Sampling can be applied, i.e., batch Stochastic Gradient Descent (SGD), where a mini-batch dataset $\mathcal{D'}\subset \mathcal{D}$ is sampled to update the parameter $\Theta^*$ according to Eq.~\eqref{eq:loss} at each iteration. The proof is presented as follows:
\begin{proof}
The likelihood in \eqref{eq:loss} can be rewritten according to
\begin{equation*}
\begin{array}{l}
     \prob{\mathcal{D} | \Theta} = \prod_{k=1}^M \prob{d_k|I_{k,l},I_{k,r},\Theta}\\
     = \prod_{k=1}^M\frac{1}{\sigma_i(I_{k,l},I_{k,r}|\Theta)\sqrt{2\pi}}\exp{{-\frac{1}{2}\brk{\frac{d_k-p_i(I_{k,l},I_{k,r}|\Theta)}{\sigma_i(I_{k,l},I_{k,r}|\Theta)}}^2}}
\end{array}    
\end{equation*}
where the first equality is derived from the independence assumption, and the second equality is due to the zero mean Gaussian assumption of $e(I_{k,l},I_{k,r})$. Then, the maximization in Eq.~\eqref{eq:likelihood} is equivalent to the following minimization, $\argmin\limits_{\Theta} \big(-\log{ \prob{\mathcal{D} | \Theta} }\big)$, where this minimization of the negative log-likelihood is equivalent to that in Eq.~\eqref{eq:loss}.
\end{proof}

Furthermore, we adopt the idea of Deep Ensemble~\cite{deep_ensemble} to improve the robustness of the distance headway estimation in OOD scenarios. The Deep Ensemble comprises $n$ different DNNs of various CNN architectures as the image encoders (see Fig.~\ref{fig:method}). Individually, the $i$th DNN in the Deep Ensemble is trained to generate predictions $p_i(I_{k,l},I_{k,r})$, $ \sigma_i^2(I_{k,l},I_{k,r})$, where the actual distance headway $d_k$ follows a Gaussian distribution $\mathcal{N}\big(p_i(I_{k,l},I_{k,r}), \sigma_i^2(I_{k,l},I_{k,r})\big)$ according to assumptions in  Proposition~\ref{prop:loss}. Collectively, $n$ DNNs in the Deep Ensemble form a Gaussian mixture, and produce the final distance headway estimates according to, 
\begin{equation}\label{eq:mixture}
\begin{aligned}
    p_k &= \frac{1}{n}\sum_{i=1}^n p_i(I_{k,l},I_{k,r}),
    \\
    \sigma^2_k &= \frac{1}{n}\sum_{i=1}^n\big(\sigma_i^2(I_{k,l},I_{k,r}) + p^2_i(I_{k,l},I_{k,r})\big) - p^2_k,
\end{aligned}
\end{equation}
where here and in the sequel we drop the dependence of $p_k,\sigma^2_k$ on $(I_{k,l},I_{k,r})$ to simplify the notations. Eventually, the actual distance headway follows a Gaussian distribution derived from the Gaussian mixture, i.e., $d_k\sim \mathcal{N}(p_k, \sigma^2_k)$. 

\subsection{Adaptive Cruise Control}\label{subsec:acc}
At the current time $t_k$, we assume that previous acceleration $a_{k-1}$ and the distance headway estimates $p_{k-1}, \sigma^2_{k-1}, p_k, \sigma^2_k$ generated from the Deep Ensemble are known. Note that the actual distance headway, i.e., $d_{k-1}$ and $d_k$, is unknown to the algorithm, but the following results hold, $d_k\sim \mathcal{N}(p_k, \sigma^2_k)$, $d_{k-1}\sim \mathcal{N}(p_{k-1}, \sigma^2_{k-1})$. Then, we can predict the distributions of future distance headway for a variable acceleration trajectory using the following proposition:
\begin{prop}\label{prop:gaussian}
Given $a_{k-1}$, an acceleration trajectory $(a_{k+i})_{i=0}^{N-1}$ of length $N$, and distribution parameters $p_{k-1}$, $\sigma^2_{k-1}$, $p_k$, $\sigma^2_k$, such that the unknown distance headway obeys $d_k\sim \mathcal{N}(p_k, \sigma^2_k)$, $d_{k-1}\sim \mathcal{N}(p_{k-1}, \sigma^2_{k-1})$, and if the lead vehicle has a constant speed, then, the variables $d_{k+i}$, $\Delta v_{k+i}$, $i=0,\dots,N$ are Gaussian distributed,
\begin{equation}\label{eq:all_gaussian}
\begin{array}{c}
d_{k+i}\sim\mathcal{N}(p_{k+i}, \sigma^2_{k+i}), 
\\
\Delta v_{k+i} \sim\mathcal{N}(p'_{k+i}, \sigma'^2_{k+i}), ~~ i=0,\dots,N,
\end{array}
\end{equation}
where $\Delta v_{k+i} = v_{k+i}^{(l)} - v_{k+i}$ is the speed difference between the lead and ego vehicles. Furthermore, the distribution parameters $p_{k+i}, \sigma^2_{k+i}$ and $p'_{k+i}, \sigma'^2_{k+i}$ can be recursively derived using the following results,
\begin{equation*}
\begin{array}{c}
    p'_k=\frac{1}{\Delta t}(p_k-p_{k-1}) -\frac{1}{2}a_{k-1}\Delta t, 
    ~
    \sigma'^2_k = \frac{1}{\Delta t^2} (\sigma^2_k+ \sigma^2_{k-1}),
    \\
    p_{k+i+1} = p_{k+i}+ p'_{k+i}\Delta t -\frac{1}{2}a_{k+i}\Delta t^2, 
    \\
    \sigma^2_{k+i+1} = \sigma^2_{k+i} + \Delta t^2\sigma'^2_{k+i},
    ~
    p'_{k+i+1} = p'_{k+i}-a_{k+i}\Delta t, 
    \\
    \sigma'^2_{k+i+1} = \frac{2}{\Delta t^2}\sigma^2_{k+i} + \sigma'^2_{k+i},
    ~
    i=0,\dots,N-1,
\end{array}
\end{equation*}
where the distribution means $p_{k+i}, p'_{k+i}$ linearly depend on the variables $(a_{k+i})_{i=0}^{N-1}$, and variances $\sigma^2_{k+i},\sigma'^2_{k+i}$ are constants, for all $i=0,\dots,N$. 
\end{prop}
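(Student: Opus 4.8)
The plan is to collapse the two-vehicle kinematics into a pair of coupled scalar difference equations for the bumper-to-bumper distance and the relative speed, and then push Gaussians through them by induction. Write $d_{k+i} = x_{k+i}^{(l)} - x_{k+i} - L$, where $L$ is the (constant) sum of bumper offsets, and subtract the two position updates in \eqref{eq:kinematics}. Because the lead vehicle keeps a constant speed (so $a^{(l)}\equiv 0$ and $v_{k+i}^{(l)}\equiv v_k^{(l)}$), the constant $L$ cancels and one obtains
\begin{equation*}
d_{k+i+1} = d_{k+i} + \Delta t\,\Delta v_{k+i} - \tfrac12\Delta t^2 a_{k+i}, \qquad \Delta v_{k+i+1} = \Delta v_{k+i} - \Delta t\,a_{k+i}, \qquad i=-1,0,\dots,N-1.
\end{equation*}
These are the only kinematic facts used; everything else is Gaussian algebra.

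For the base case ($i=0$) I would invert the first relation at $i=-1$, namely $d_k = d_{k-1} + \Delta t\,\Delta v_{k-1} - \tfrac12\Delta t^2 a_{k-1}$, to solve for $\Delta v_{k-1}$, and then apply the second relation to get $\Delta v_k = \Delta v_{k-1} - \Delta t\,a_{k-1} = \tfrac1{\Delta t}(d_k - d_{k-1}) - \tfrac12 a_{k-1}\Delta t$. Substituting the ensemble outputs $d_k\sim\mathcal N(p_k,\sigma^2_k)$ and $d_{k-1}\sim\mathcal N(p_{k-1},\sigma^2_{k-1})$, treated as independent, a difference of independent Gaussians is Gaussian, so $\Delta v_k\sim\mathcal N(p'_k,\sigma'^2_k)$ with $p'_k = \tfrac1{\Delta t}(p_k-p_{k-1}) - \tfrac12 a_{k-1}\Delta t$ and $\sigma'^2_k = \tfrac1{\Delta t^2}(\sigma^2_k+\sigma^2_{k-1})$, as claimed.

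The inductive step assumes $d_{k+i}\sim\mathcal N(p_{k+i},\sigma^2_{k+i})$ and $\Delta v_{k+i}\sim\mathcal N(p'_{k+i},\sigma'^2_{k+i})$. The first difference equation exhibits $d_{k+i+1}$ as an affine combination of $d_{k+i}$ and $\Delta v_{k+i}$ plus the deterministic term $-\tfrac12\Delta t^2 a_{k+i}$; an affine image of (jointly) Gaussian inputs is Gaussian, so matching means gives $p_{k+i+1} = p_{k+i} + p'_{k+i}\Delta t - \tfrac12 a_{k+i}\Delta t^2$, and under the working convention that the current headway and current relative-speed estimates are uncorrelated, matching variances gives $\sigma^2_{k+i+1} = \sigma^2_{k+i} + \Delta t^2\sigma'^2_{k+i}$. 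For $\Delta v_{k+i+1}$ I would re-express it as $\tfrac1{\Delta t}(d_{k+i+1}-d_{k+i}) - \tfrac12 a_{k+i}\Delta t$ (treating $d_{k+i+1}$ and $d_{k+i}$ as independent), which reproduces the mean $p'_{k+i+1} = p'_{k+i} - a_{k+i}\Delta t$ — one checks this equals $\tfrac1{\Delta t}(p_{k+i+1}-p_{k+i}) - \tfrac12 a_{k+i}\Delta t$, so the two ways of computing it agree — and the variance $\sigma'^2_{k+i+1} = \tfrac1{\Delta t^2}(\sigma^2_{k+i+1}+\sigma^2_{k+i}) = \tfrac2{\Delta t^2}\sigma^2_{k+i} + \sigma'^2_{k+i}$. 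Finally, for the structural claims: in the mean recursions $a_{k+i}$ enters only additively (coefficient $-\tfrac12\Delta t^2$ for $p$, $-\Delta t$ for $p'$) and it appears nowhere in the variance recursions; unrolling, $p_{k+i}$ and $p'_{k+i}$ are affine — hence linear after absorbing the $a$-free part into a constant — in $(a_k,\dots,a_{k+i-1})$, while $\sigma^2_{k+i},\sigma'^2_{k+i}$ depend only on $\sigma^2_k,\sigma^2_{k-1},\Delta t$, i.e.\ are constants with respect to the acceleration decision variables.

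The main obstacle is the independence bookkeeping rather than any hard computation. Since $\Delta v_{k+i}$ is a deterministic function of the headway estimates, it is genuinely correlated with $d_{k+i}$ and with $d_{k+i-1}$, so the variance recursions as written correspond to a specific convention — treating the headway/relative-speed pair, and consecutive headway estimates, as independent at each propagation step — rather than to the exact joint covariance of the trajectory. The proof should state this convention explicitly (and note that it is a conservative, slightly pessimistic choice for the chance constraints downstream); once it is fixed, the two ``affine image of a Gaussian'' arguments and the induction go through mechanically.
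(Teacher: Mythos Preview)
Your approach is correct and essentially identical to the paper's: both derive the three kinematic equalities $\Delta v_k=\tfrac{1}{\Delta t}(d_k-d_{k-1})-\tfrac{1}{2}a_{k-1}\Delta t$, $d_{k+i+1} = d_{k+i}+\Delta v_{k+i}\Delta t -\tfrac{1}{2}a_{k+i}\Delta t^2$, and $\Delta v_{k+i+1} =\tfrac{1}{\Delta t}(d_{k+i+1}-d_{k+i})-\tfrac{1}{2}a_{k+i}\Delta t$, and then push Gaussian means and variances through them. You are in fact more careful than the paper in explicitly flagging the independence convention needed for the variance recursions, which the paper's proof leaves unstated.
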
 

\begin{proof}
Assuming the lead vehicle maintains a constant speed, the proposition above can be derived from the following equalities, $\Delta v_k=\frac{1}{\Delta t}(d_k-d_{k-1})-\frac{1}{2}a_{k-1}\Delta t$, $d_{k+i+1} = d_{k+i}+\Delta v_{k+i}\Delta t -\frac{1}{2}a_{k+i}\Delta t^2$, $\Delta v_{k+i+1} =\frac{1}{\Delta t}(d_{k+i+1}-d_{k+i})-\frac{1}{2}a_{k+i}\Delta t$, $i=0,\dots,N-1$. Based on Propostion~\ref{prop:gaussian}, we establish the prediction of future distance headway $d_{k+i}$ and speed difference $\Delta v_{k+i}$ as Gaussian distributions with the means being the linear functions of the acceleration trajectory $(a_{k+i})_{i=0}^{N-1}$ and constant variances. 
\end{proof}

Hence, treating $(a_{k+i})_{i=0}^{N-1}$ as decision variables, we formulate a Stochastic MPC problem that predicts the distributions of the future distance headway and speed difference for different $(a_{k+i})_{i=0}^{N-1}$ and optimizes  $(a_{k+i})_{i=0}^{N-1}$ while incorporating the objectives in Sec.~\ref{sec:problem} according to,
\begin{subequations}\label{eq:SMPC}
\begin{multline}\label{eq:SMPC_obj}
    \argmin\limits_{\substack{a_{k+i-1}, v_{k+i}, p_{k+i},\\ p'_{k+i},~i=1,\dots,N}}
    ~
    \E
    \Big[
    \sum_{i=0}^{N-1} r_1 a^2_{k+i} + r_2 (a_{k+i}-a_{k+i-1})^2 
    \\
     + \sum_{i=1}^N q_1(v_{k+i}-v_s)^2 + q_2 \Delta v^2_{k+i}
    \Big]
\end{multline}
subject to:
\begin{align}
    \prob{d_{k+i}\geq d_s+T_sv_{k+i}}\geq 1-\epsilon_i,\label{eq:SMPC_chance1}
    \\
    d_{k+i}\sim\mathcal{N}(p_{k+i}, \sigma^2_{k+i}), ~\Delta v_{k+i}\sim\mathcal{N}(p'_{k+i}, \sigma'^2_{k+i}),\label{eq:SMPC_chance2}
    \\
    p_{k+i} = p_{k+i-1}+ p'_{k+i-1}\Delta t -\frac{1}{2}a_{k+i-1}\Delta t^2, \label{eq:SMPC_eq1}
    \\
    p'_{k+i} = p'_{k+i-1}-a_{k+i-1}\Delta t, \label{eq:SMPC_eq2}
    \\
    v_{\min}\leq v_{k+i}\leq v_{\max}, ~ a_{\min}\leq a_{k+i-1}\leq a_{\max},\label{eq:SMPC_limits}
    \\
    v_{k+i}=v_{k+i-1}+a_{k+i-1}\Delta t,~i=1,\dots,N,\label{eq:SMPC_eom}
\end{align}
\end{subequations}
where $N$ is the prediction horizon; $\epsilon_i\in(0,1]$, $i=1,\dots,N$ are tunable positive constants; $v_s$ is the driver-set target speed; $d_s$, $T_s$ are the adjustable ACC stopping distance, and constant time headway, respectively. Meanwhile, Proposition~\ref{prop:gaussian} implies larger variances $\sigma^2_{k+i}$, $\sigma'^2_{k+i}$ for prediction horizon $i$ further in the future. We set the constants $\epsilon_i$ to satisfy the following inequality, $\epsilon_1\leq \cdots\leq \epsilon_N$, such that the chance constraints~\eqref{eq:SMPC_chance1} are relaxed more further along the horizon. The variables $r_1,r_2, q_1,q_2$ are tunable weights that balance the minimization of control effort, the reduction of the rate of changes in control, speed tracking, and lead vehicle-following, respectively. Furthermore, we use the chance constraints in Eq.~\eqref{eq:SMPC_chance1}, \eqref{eq:SMPC_chance2} to enforce a sufficient distance headway in probability, and the equalities \eqref{eq:SMPC_eq1}, \eqref{eq:SMPC_eq2} propagate the distribution means based on the results from Proposition~\ref{prop:gaussian}.

Note that the Gaussian-distributed random variables $d_{k+i}$, $\Delta v_{k+i}$ and chance constraints render the MPC problem~\eqref{eq:SMPC} stochastic. To make the problem machine solvable, we transcript the Stochastic MPC into a deterministic one using the following result:
\begin{prop}\label{prop:DMPC}
Under the condition that $\delta_i=0$ for all $i=1,\dots,N$, solving the following \textbf{Quadratic Programming} problem recovers the solution of the Stochastic MPC problem \eqref{eq:SMPC},
\begin{subequations}\label{eq:DMPC}
\begin{multline}\label{eq:DMPC_obj}
    \argmin\limits_{\substack{a_{k+i-1}, v_{k+i}, p_{k+i},\\ \delta_i, p'_{k+i},~i=1,\dots,N}}
    ~
    \sum_{i=0}^{N-1} r_1 a^2_{k+i} + r_2 (a_{k+i}-a_{k+i-1})^2 
    \\
     + \sum_{i=1}^N q_1(v_{k+i}-v_s)^2 + q_2 p'^2_{k+i}+\rho \delta_i
\end{multline}
subject to:
\begin{align}
    p_{k+i}\geq d_s+T_s v_{k+i} + \sqrt{2\sigma^2_{k+i}}{\tt erf}^{-1}(1-2\epsilon_i) -\delta_i\label{eq:DMPC_safe}
    \\    \eqref{eq:SMPC_eq1},~\eqref{eq:SMPC_eq2},~\eqref{eq:SMPC_limits},~\eqref{eq:SMPC_eom},~\delta_i\geq0,~i=1,\dots,N,
\end{align}
\end{subequations}
where ${\tt erf}^{-1}$ is the inverse image of the Gauss error function, non-negative variables $\delta_i$, $i=1,\dots,N$ are used to relax the constraints~\eqref{eq:DMPC_safe} ensuring recursive feasibility and the weight $\rho$ is adjusted to penalize the violations of soft constraints~\eqref{eq:DMPC_safe} due to the introduction of $\delta_i$.
\end{prop}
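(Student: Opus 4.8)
The plan is to show that, under the hypothesis $\delta_i=0$ for all $i$, each ingredient of the stochastic program \eqref{eq:SMPC} --- the expected-value objective \eqref{eq:SMPC_obj} and the chance constraints \eqref{eq:SMPC_chance1}--\eqref{eq:SMPC_chance2} --- collapses to the corresponding deterministic quantity in \eqref{eq:DMPC}, so that the two problems share the same feasible set and the same minimizer in the decision variables $(a_{k+i})$, $(v_{k+i})$, $(p_{k+i})$, $(p'_{k+i})$. The only nontrivial piece is the reformulation of the chance constraint; everything else is bookkeeping with linearity of expectation and the structure supplied by Proposition~\ref{prop:gaussian}.

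First I would dispatch the objective. By Proposition~\ref{prop:gaussian}, $v_{k+i}$ is propagated deterministically through \eqref{eq:SMPC_eom} from the known $v_k$ and the decision variables, so the terms $r_1 a^2_{k+i}$, $r_2(a_{k+i}-a_{k+i-1})^2$, and $q_1(v_{k+i}-v_s)^2$ carry no randomness and pass through the expectation unchanged. For the remaining term, $\Delta v_{k+i}\sim\mathcal{N}(p'_{k+i},\sigma'^2_{k+i})$ with $p'_{k+i}$ affine in the decision variables and $\sigma'^2_{k+i}$ a fixed constant, so $\E[\Delta v^2_{k+i}] = p'^2_{k+i} + \sigma'^2_{k+i}$. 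Summing, the stochastic objective \eqref{eq:SMPC_obj} equals the deterministic objective \eqref{eq:DMPC_obj} (with $\delta_i=0$, so the $\rho\delta_i$ terms vanish) plus the additive constant $q_2\sum_{i=1}^N \sigma'^2_{k+i}$, which is independent of every decision variable and hence does not affect the $\argmin$.

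Next I would convert the chance constraint. Since $v_{k+i}$ is deterministic, the threshold $d_s + T_s v_{k+i}$ in \eqref{eq:SMPC_chance1} is a (decision-dependent but) deterministic quantity, and $d_{k+i}\sim\mathcal{N}(p_{k+i},\sigma^2_{k+i})$ with $\sigma^2_{k+i}\ge\sigma^2_k>0$ by the recursion in Proposition~\ref{prop:gaussian}, so standardization is valid. Writing $\Phi$ for the standard normal CDF and using $\Phi(x)=\tfrac12\big(1+{\tt erf}(x/\sqrt2)\big)$, we get $\prob{d_{k+i}\geq d_s+T_sv_{k+i}} = 1-\Phi\!\big((d_s+T_sv_{k+i}-p_{k+i})/\sigma_{k+i}\big)$. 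Requiring this to be at least $1-\epsilon_i$ and inverting the strictly increasing $\Phi$ gives $p_{k+i}\geq d_s+T_sv_{k+i}-\sigma_{k+i}\Phi^{-1}(\epsilon_i)$; substituting $\Phi^{-1}(\epsilon_i)=\sqrt2\,{\tt erf}^{-1}(2\epsilon_i-1)$ and using the oddness of ${\tt erf}^{-1}$ together with $\sigma_{k+i}\sqrt2=\sqrt{2\sigma^2_{k+i}}$ yields exactly \eqref{eq:DMPC_safe} with $\delta_i=0$. I expect this sign-and-convention bookkeeping --- matching the ${\tt erf}^{-1}$ definition and tracking the direction of the inequality through the inversion of $\Phi$ --- to be the main (and essentially only) obstacle.

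Finally I would assemble the pieces: the constraints \eqref{eq:SMPC_eq1}, \eqref{eq:SMPC_eq2}, \eqref{eq:SMPC_limits}, \eqref{eq:SMPC_eom} appear verbatim in both problems, and by the two reductions above, with $\delta_i=0$ the feasible sets coincide and the objectives agree up to an additive constant; hence the minimizer sets of \eqref{eq:DMPC} and \eqref{eq:SMPC} are identical, i.e., solving \eqref{eq:DMPC} recovers the solution of \eqref{eq:SMPC}. It then only remains to observe that \eqref{eq:DMPC} is indeed a quadratic program: by Proposition~\ref{prop:gaussian} the quantities $\sigma^2_{k+i}$, $\sigma'^2_{k+i}$ (and thus $\sqrt{2\sigma^2_{k+i}}\,{\tt erf}^{-1}(1-2\epsilon_i)$) are constants, so the objective is a positive semidefinite quadratic form in $(a_{k+i},v_{k+i},p_{k+i},p'_{k+i})$ plus the linear penalty $\rho\sum_i\delta_i$, and all constraints \eqref{eq:DMPC_safe}, \eqref{eq:SMPC_eq1}, \eqref{eq:SMPC_eq2}, \eqref{eq:SMPC_limits}, \eqref{eq:SMPC_eom}, $\delta_i\ge0$ are affine.
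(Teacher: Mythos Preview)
Your proposal is correct and follows essentially the same approach as the paper: reduce the expected objective via $\E[\Delta v_{k+i}^2]=p'^2_{k+i}+\sigma'^2_{k+i}$ with $\sigma'^2_{k+i}$ constant, and convert the Gaussian chance constraint into the affine inequality \eqref{eq:DMPC_safe}. In fact you are more explicit than the paper on the second step---the paper simply asserts that \eqref{eq:SMPC_chance1}--\eqref{eq:SMPC_chance2} are equivalent to \eqref{eq:DMPC_safe}, whereas you carry out the standardization and the $\Phi/{\tt erf}^{-1}$ bookkeeping---and you additionally verify the QP structure, which the paper leaves implicit.
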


\begin{proof}
We consider the condition that $\delta_i=0$ for all $i=1,\dots,N$. The only random variables in Eq.~\eqref{eq:SMPC_obj} are $\Delta v_{k+i}\sim\mathcal{N}(p'_{k+i}, \sigma'^2_{k+i})$, therefore, the other terms can be moved out of the expectation. Moreover, we can establish the following equality, $\E\bsq{\Delta v_{k+i}^2}=\sigma'^2_{k+i} + \E\bsq{\Delta v_{k+i}}^2=\sigma'^2_{k+i} + p'^2_{k+i}$, which combined with $\sigma'^2_{k+i}$ being a constant from the results in Proposition~\ref{prop:gaussian} proves that $\argmin \sum_{i=1}^N \E \bsq{q_2 \Delta v^2_{k+i}}= \argmin \sum_{i=1}^N q_2 p'^2_{k+i}$. Namely, the optimization objectives of \eqref{eq:SMPC} and \eqref{eq:DMPC} are equivalent. Furthermore, the chance constraints~\eqref{eq:SMPC_chance1},~\eqref{eq:SMPC_chance2} are equivalent to Eq.~\eqref{eq:DMPC_safe}.
\end{proof}

Eventually, provided with the estimated distributions from Deep Ensemble, we formulate a Deterministic MPC that is recursively feasible, predicts the future distance headway distributions, and computes an acceleration trajectory $(a_{k+i})_{i=0}^{N-1}$ ensuring probabilistic safety. 
\section{Case Studies}\label{sec:results}
Here, we demonstrate the effectiveness of the proposed ACC algorithm. The Deep Ensemble is trained and evaluated using a high-fidelity simulation environment in Sec.~\ref{subsec:result_ensemble}. We showcase the proposed ACC algorithm that integrates the Deep Ensemble with the Stochastic MPC in a simulation example in Sec.~\ref{subsec:result_acc}, and report the quantitative results in Sec.~\ref{subsec:result_acc_batch} leveraging a high-fidelity simulator and real-world vehicle trajectories. Finally, in comparison with the in-distribution example provided in Sec.~\ref{subsec:result_acc}, the performance of the proposed algorithm is demonstrated in OOD scenarios in Sec.~\ref{subsec:result_ood}.
\subsection{Deep Ensemble for Distance Headway Estimation}\label{subsec:result_ensemble}

\begin{figure}[!h]
    \centering
    \includegraphics[width=0.45\textwidth]{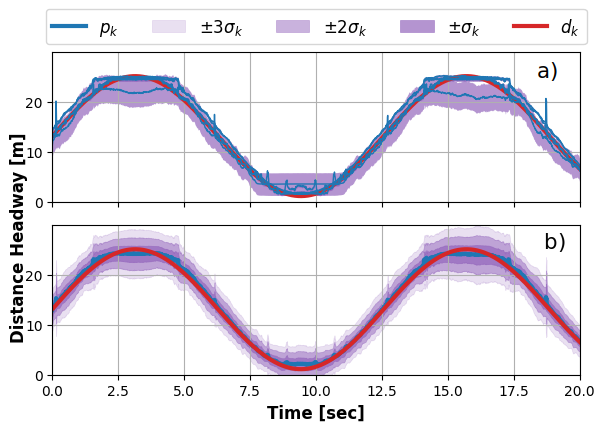}
    \caption{Distance headway estimates $p_k$ (blue lines) with uncertainty quantification using 1$\sigma$, 2$\sigma$, and 3$\sigma$ intervals (purple bands) versus the actual $d_k$ (red lines). (a) The results, $p_i(I_{k,l},I_{k,r}), \sigma_i^2(I_{k,l},I_{k,r})$, $i=1,\dots,6$, of each DNN visualized in overlap. (b) Deep Ensemble estimation results. A demonstration video is available in \url{https://bit.ly/3TCM5lC}.}
    \label{fig:nn_testing}
\end{figure}

In the Deep Ensemble, we integrate 6 DNNs, i.e., $n=6$, with each of them employing a different CNN architecture as the image encoder. We utilize the following CNNs due to their outstanding performance as image encoders in solving image classification problems: ResNet50~\cite{resnet}, GoogleNet~\cite{googlenet}, AlexNet~\cite{alexnet}, MobileNetV2~\cite{mobilenet}, EfficientNet~\cite{efficientnet}, and VGG16~\cite{vgg}. The goal is to train individual DNNs to predict the distribution parameters $p_i(I_{k,l},I_{k,r})$, $\sigma_i^2(I_{k,l},I_{k,r})$ given the corresponding RGB images $I_{k,l}, I_{k,r}$, such that $d_k\sim\mathcal{N}\big(p_i(I_{k,l},I_{k,r}), \sigma_i^2(I_{k,l},I_{k,r})\big)$. 

We use the Carla simulator~\cite{carla} to collect datasets and test our developments. We collect a dataset $D=(d_k, I_{k,l}, I_{k,r})_{k=1}^{20706}$ of 20706 data triplets. The data points are collected in the map {\tt Town06} in Carla. To simplify the exposition of the approach, we fix the model of the lead vehicle to {\tt vehicle.lincoln.mkz\_2020} (2020 Lincoln MKZ Sedan) and set the weather to {\tt ClearNoon} (good lighting conditions, no rain, and no objects casting shadow). The distance headway in the dataset ranges from 1 to 25 m, where data points with a distance headway larger than 25 m are neglected due to resolution limitations of the cameras. 

We use Python with Pytorch~\cite{pytorch} to train the DNN using Batch SGD and the loss function~\eqref{eq:loss} defined among the mini-batch dataset. We train the DNNs for 100 epochs using a Batch SGD with momentum. The mini-batch sizes are set to 60, 105, 500, 65, 60, and 75 (maximum batch size capability of a Nvidia GeForce RTX 4080 GPU with 16 GB memory) for ResNet50, GoogleNet, AlexNet, MobileNetV2, EfficientNet, and VGG16, respectively. We set the learning rate and momentum to $0.001$ and $0.9$, respectively. Each time we train a different DNN, we randomly select 80\% data points for training and 20\% data points for validation, while the DNN initial parameters $\Theta$ are also randomly initialized. All images input to the DNN are normalized using the following function in the {\tt torchvision} package,
\begin{multline*}    
    \tt transforms.Normalize\\
    ([0.485, 0.456, 0.406], [0.229, 0.224, 0.225]).
\end{multline*}
Meanwhile, to ensure numerical stability with the logarithm in the loss function~\eqref{eq:loss}, we enforce the positiveness of the output variance $\sigma_i^2(I_{k,l},I_{k,r})$ without significantly altering its value, using the following assignment,
\begin{equation*}
    \sigma_i^2(I_{k,l},I_{k,r}) \leftarrow \epsilon + \log\brk{1+\exp{\sigma_i^2(I_{k,l},I_{k,r})}},
\end{equation*}
where a small $\epsilon>0$ is chosen, e.g., $\epsilon=10^{-6}$. 

To evaluate the performance of each DNN and the Deep Ensemble, we separately collect a testing trajectory $(d_k, I_{k,l}, I_{k,r})_{k=1}^{M'}$ of 20 seconds, and the results are reported in Fig.~\ref{fig:nn_testing}. The 1$\sigma$ band in the Deep Ensemble results, i.e., $p_k\pm \sigma_k$, contains the actual distance headway $d_k$ which demonstrates the effectiveness of our method in both providing accurate estimates and quantifying the estimation uncertainties. Moreover, we also note that results from individual DNNs differ from each other significantly when the distance headway is large. This is due to the resolution limitation of the RGB images, where the lead vehicle vanishes as a black pixel when the distance headway is larger than 20 m. However, the Deep Ensemble can reflect this uncertainty using a larger variance in the estimation results.

\subsection{Adaptive Cruise Control in Carla Simulation}\label{subsec:result_acc}
We construct car-following scenarios using the Carla simulator~\cite{carla}, where the follower ego vehicle is controlled by our algorithm to follow a lead vehicle. In the sequel, we leverage a naturalistic traffic trajectory dataset, named High-D dataset~\cite{highD}, to configure realistic car-following scenarios. The High-D dataset records real-world vehicle trajectories in German freeways. The statistics visualized in Fig.~\ref{fig:highD_stats} are obtained from data of 110,500 vehicles driven over 44,500 kilometers. 

\begin{figure}[!h]
    \centering
    \vspace{0.5em}
    \includegraphics[width=0.45\textwidth]{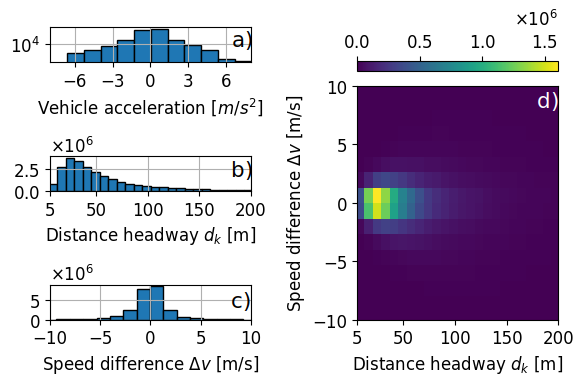}
    \caption{Histogram of vehicle driving statistics in High-D dataset: (a) longitudinal acceleration/deceleration (y-axis in log scale); (b) distance headway and (c) speed difference between lead and follower vehicles; (d) 2D histogram combining statistics in (b) and (c).}
    \label{fig:highD_stats}
\end{figure}

The lower speed limit is set to $v_{\min}=0\;\rm m/s$ while the upper speed limit of the dataset is $v_{\max}=34\;\rm m/s$. As shown in Fig.~\ref{fig:highD_stats}, the majority of longitudinal accelerations and decelerations of High-D vehicles are within the range of $[a_{\min},a_{\max}]=[-6, 6]\;\rm m/s^2$. The Stochastic MPC has a prediction horizon of 3 sec, i.e., $N=3$ and $\Delta t=1$ sec. Furthermore, the Stochastic MPC operates in an asynchronous updating scheme and recomputes $a_k$ every 0.5 sec. Other parameters are set using the following values: $d_s=15$ m, $T_s=0$ sec, $[r_1,r_2,q_1,q_2,\rho]=[1,5,5,1,50]$, and $\epsilon_{1,2,3}=0.2,\; 0.4,\; 0.6$, respectively. Provided with distributions $d_{k+i}\sim\mathcal{N}(p_{k+i}, \sigma^2_{k+i})$, $\Delta v_{k+i}\sim\mathcal{N}(p'_{k+i}, \sigma'^2_{k+i}),$ from the Deep Ensemble, the MPC problem~\eqref{eq:DMPC} is solved using PyDrake~\cite{drake}.

\begin{figure}[!h]
    \centering
    \vspace{0.5em}
    \includegraphics[width=0.45\textwidth]{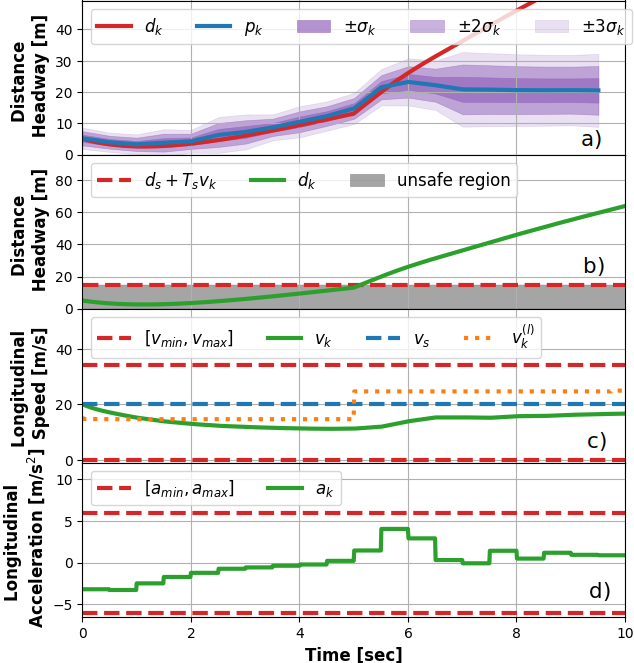}
    \caption{ACC simulation example in which the ego vehicle first follows the lead vehicle given its speed smaller than $v_s=20\;\rm m/s$, then, enters the speed-tracking mode after the step change in the lead vehicle speed: (a) distance headway estimation from the Deep Ensemble; (b) ACC algorithm regulating the ego out of the unsafe region; (c) speed trajectories of the lead and ego vehicles; (d) acceleration commands. The animation is available in \url{https://bit.ly/3TFgxLZ}.}
    \label{fig:acc_sim}
\end{figure}

A simulation example is presented in Fig.~\ref{fig:acc_sim}. Due to the limitation in the image resolution, the lead vehicle vanishes in the RGB images (see the video in \url{https://bit.ly/3TFgxLZ}) as a black pixel when $d_k\geq 20\;\rm m$. We observe that the Deep Ensemble captures this source of uncertainties by presenting amplified variances in distance headway estimations (see Fig.~\ref{fig:acc_sim}a). We also note that the occurrence of uncertain distance headway estimates when $d_k\geq 20\;\rm m$ will not affect the performance of the ACC algorithm in securing safety in the near future. Our ACC algorithm successfully decelerates the ego vehicle to keep a safe distance headway (see Fig.~\ref{fig:acc_sim}b). Moreover, as shown in Fig.~\ref{fig:acc_sim}c, with a moderate control effort, our algorithm performs car-following when the lead vehicle is at a speed lower than $v_s=20\;\rm m/s$, and tracks the driver-set speed $v_s$ when the lead vehicle accelerates to a higher speed. 

\subsection{Adaptive Cruise Control with Real-World Trajectories}\label{subsec:result_acc_batch}

\begin{figure}[!h]
    \centering
    \includegraphics[width=0.47\textwidth]{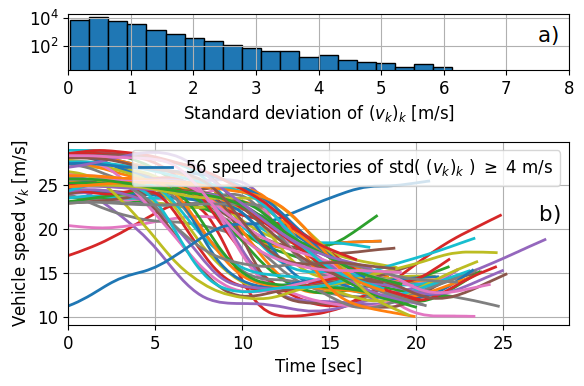}
    \caption{Sampling of lead vehicles' speed trajectories from the High-D dataset: (a) histogram of the standard deviation ${\tt std}\big((v_k)_k\big)$ of the High-D vehicle speed trajectory $(v_k)_k$ (y-axis in log scale); (b) 56 speed trajectories $(v_k)_k$ are chosen with their standard deviation larger than 4 $\rm m/s$.}
    \label{fig:highD_vTraj}
\end{figure}

We inherit parameters and ACC configurations from the previous section and use the High-D dataset~\cite{highD} to quantitatively evaluate the performance of our algorithm. We construct 56 car-following test cases using the Carla simulator~\cite{carla}. In the 56 test cases, the speed trajectories $(v_k)_{k=1}^K$ of the lead vehicles are sampled from the High-D dataset (see Fig.~\ref{fig:highD_vTraj}) with standard deviations larger than 4 $\rm m/s$. We remove speed trajectories where the lead vehicles take fewer acceleration/braking actions to be able to test our algorithm in more challenging but realistic cases. In the sequel, we use $(v_k)_k$ to denote $(v_k)_{k=1}^K$ given the length of the speed trajectory $K$ is variable.

\begin{figure}[!h]
    \centering
    \vspace{0.5em}
    \includegraphics[width=0.47\textwidth]{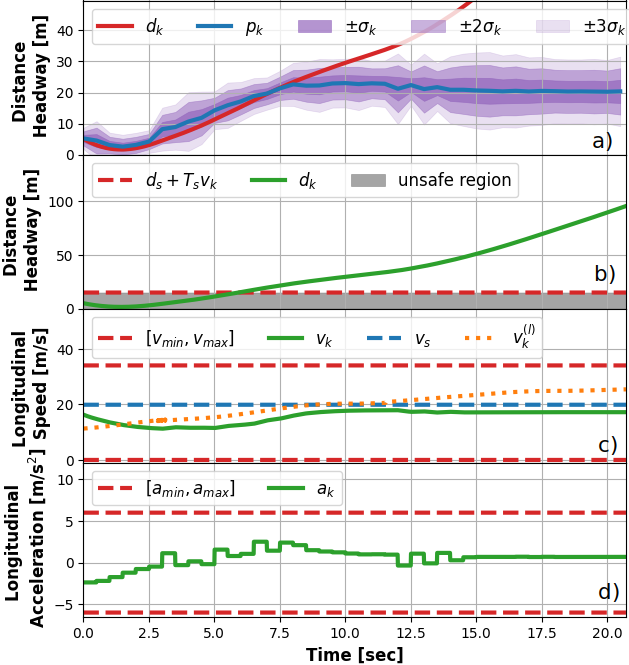}
    \caption{Another ACC testing example: the ego vehicle first follows the lead vehicle, then, tracks the set speed while keeping a safety distance headway. The animation is available in \url{https://bit.ly/4adwqP2}.}
    \label{fig:highD_example}
\end{figure}

Moreover, as shown in Fig.~\ref{fig:highD_stats}, the majority of the follower vehicles in the High-D dataset have a distance headway larger than $5\;\rm m$ and a relative speed difference within an interval of $[-5, 5]\;\rm m/s$. Hence, we initialize the follower vehicle with an initial distance headway $d_k = 5\;\rm m$ and an initial speed $5\;\rm m/s$ larger than the lead vehicle, i.e., $\Delta v_k = 5\;\rm m/s$. These initial conditions yield the initial distance headway which is unsafe (see Fig.~\ref{fig:highD_example}); this allows us to examine the ability of the algorithm to handle emergency conditions. Finally, in each test case, the ACC target speed $v_s$ is set to be the average speed of the sampled lead vehicle's speed trajectory $(v_k)_k$. Subsequently, the speed of the lead vehicle fluctuates near $v_s$, and we can test both the speed-tracking and car-following functionalities in one test case. One test case is shown in Fig.~\ref{fig:highD_example}.  

\begin{figure}[!h]
    \centering
    \vspace{0.5em}
    \includegraphics[width=0.47\textwidth]{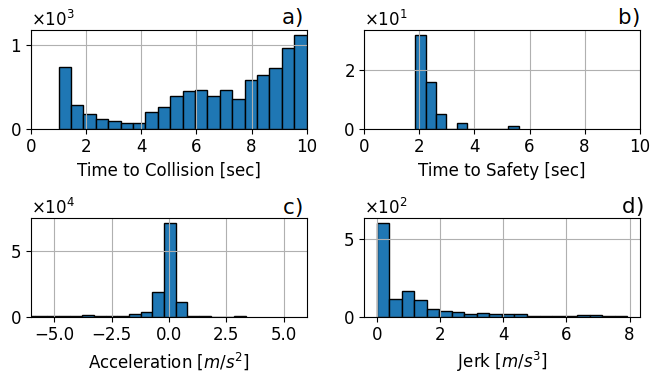}
    \caption{Statistics from the 56 test cases where each data point corresponds to a frame in the simulation where the frame rate is 100 Hz: (a) Time to Collision (ToC) is calculated as the time required for the follower and lead vehicles to collide assuming they travel at the current speed. (Infinite ToC values when the lead vehicle is faster than the follower are neglected) (b) Time to Safety is defined as the time elapsed from $t=0$ to the time instance $t_k$ when $d_k\geq d_s +T_sv_k$. (c) Acceleration commands from the ACC algorithm. (d) Jerk.}
    \label{fig:highD_results}
\end{figure}

As shown in Fig.~\ref{fig:highD_results}, our algorithm can ensure a sufficient Time-to-Collsion (ToC) that is larger than 4 seconds at most of the simulation time. We also note that the majority of the time when the ToC $\leq 2$ sec is due to the test cases being initialized with a small distance headway and large velocity difference. Meanwhile, our algorithm can also regulate the ego vehicle back to a safe distance headway within 4 seconds while the acceleration/deceleration effort is moderate and the jerk values are kept smaller than $2\;\rm m/s^3$ for a comfortable driving experience. 

\subsection{ACC in Out-Of-Distribution Scenarios}\label{subsec:result_ood}

\begin{figure}[!h]
    \centering
    \includegraphics[width=0.47\textwidth]{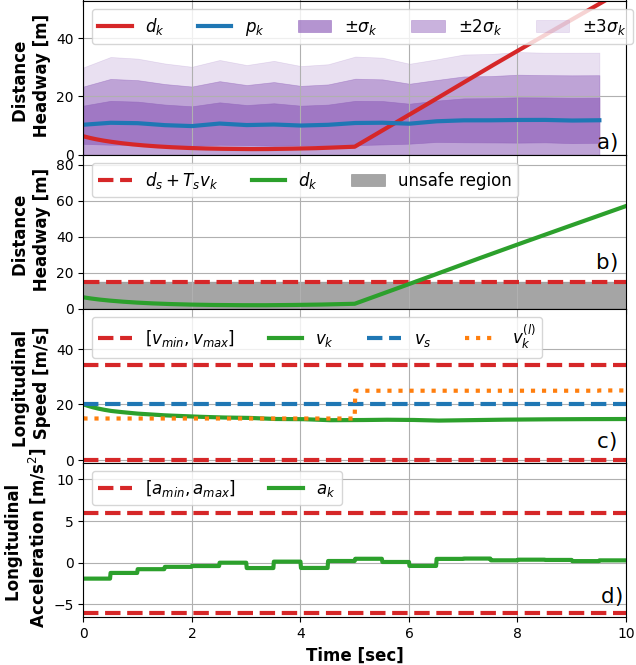}
    \caption{ACC example in an OOD scenario. The animation is available in \url{https://bit.ly/4ag8rih}.}
    \label{fig:acc_sim_ood}
\end{figure}

We note that the previous case studies are conducted using the same weather settings and the same lead vehicle model as in the training dataset (see Sec.~\ref{subsec:result_ensemble}). To further explore the capability of the algorithm in OOD scenarios, we change the lead vehicle model from a small 2020 Lincoln MKZ sedan (in black) to a large firetruck (in red). Moreover, we also change the weather from {\tt ClearNoon} to {\tt HardRainSunset} where the lighting condition is worse, objects cast shadows on the road and raindrops block the camera views. The animation is available in \url{https://bit.ly/4ag8rih}. To compare with the in-distribution scenario, we use the same initial distance headway, initial speed difference, and speed profile of the lead vehicle as in the example presented in Fig.~\ref{fig:acc_sim}. The results are reported in Fig.~\ref{fig:acc_sim_ood}. We note that the Deep Ensemble can capture the out-of-distribution and yield predictions with large variance. Then, Stochastic MPC commands the vehicle to take conservative maneuvers and decelerate to a speed lower than the set speed $v_s$.

\section{Conclusion}\label{sec:conclusion}
In this paper, we introduced a Deep Ensemble-based distance headway estimator using RGB images of the lead vehicle. This estimator provides both mean and variance of the headway distance. A Stochastic MPC based controller is then designed to enable adaptive cruise control with probabilistic safety. Using a high-fidelity simulator and real-world traffic dataset, we demonstrated the effectiveness of our proposed approach in speed tracking and car following, ensuring safety in both in-distribution and out-of-distribution scenarios.
\bibliographystyle{IEEEtran}
\bibliography{ref.bib}
\end{document}